
\documentclass[letterpaper, 10 pt, conference]{ieeeconf}  

\IEEEoverridecommandlockouts                              

\overrideIEEEmargins                                      



\usepackage{times} 
\usepackage{amsmath} 
\usepackage{amssymb}  
\usepackage{dsfont}
\usepackage{color}
\usepackage{graphicx}
\usepackage{siunitx}
\usepackage{url}
\usepackage{subcaption}
\usepackage{cite}
\usepackage{comment}
\usepackage{algorithm}
\usepackage{algpseudocode}
\usepackage{algorithmicx}

\usepackage{amsthm}
\newtheorem{theorem}{Theorem}

\newtheorem{proposition}{Proposition}
\newtheorem{remark}{Remark}

\newcommand{\Figref}[1]{Figure\,\ref{#1}}
\newcommand{\figref}[1]{Fig.\,\ref{#1}}
\newcommand{\secref}[1]{Sec.\,\ref{#1}}

\newcommand{\mP}{{\mathbb{P}}}
\newcommand{\mI}{{\mathds{1}}}
\newcommand{\R}{\mathbb{R}}
\newcommand{\mZ}{\mathbb{Z}}
\newcommand{\SProb}{\Psi}
\newcommand{\SSet}{\mathcal{C}}

\DeclareMathOperator*{\argsup}{arg\,sup}

\title{\LARGE \bf
%
Autonomous Drifting Based on Maximal Safety Probability Learning
}

\author{
Hikaru Hoshino$^{1}$,
Jiaxing Li$^{2}$,
Arnav Menon$^{2}$,
John M. Dolan$^{3}$, 
Yorie Nakahira$^{2}$
%
\thanks{*This work was supported in part by Grant-in-Aid for Scientific Research (KAKENHI) from the Japan Society for Promotion of Science (\#23K13354)}
\thanks{$^{1}$H.~Hoshino is with the Department of Electrical Materials and Engineering, University of Hyoto, 
        {\tt\small hoshino@eng.u-hyogo.ac.jp}}%
\thanks{$^{2}$J.~Li, A.~Menon, and Y.~Nakahira are with the Department of Electrical and Computer Engineering, Carnegie  Mellon  University, 
        {\tt\small \{jiaxingl, arnavmen, ynakahira\}@andrew.cmu.edu}}%
\thanks{$^{3}$J.~M.~Dolan is with the Robotics Institute, Carnegie  Mellon  University, 
        {\tt\small jdolan@andrew.cmu.edu}}%
}

\begin{document}

\maketitle
\thispagestyle{empty}
\pagestyle{empty}

\begin{abstract}

This paper proposes a novel learning-based framework for autonomous driving based on the concept of maximal safety probability.  
Efficient learning requires rewards that are informative of desirable/undesirable states, but such rewards are challenging to design manually due to the difficulty of differentiating better states among many safe states. On the other hand, learning policies that maximize safety probability does not require laborious reward shaping but is numerically challenging because the algorithms must optimize policies based on binary rewards sparse in time.
Here, we show that physics-informed reinforcement learning can efficiently learn this form of maximally safe policy. Unlike existing drift control methods, our approach does not require a specific reference trajectory or complex reward shaping, and can learn safe behaviors only from sparse binary rewards. This is enabled by the use of the physics loss that plays an analogous role to reward shaping. The effectiveness of the proposed approach is demonstrated through lane keeping in a normal cornering scenario and safe drifting in a high-speed racing scenario. 


\end{abstract}

\section{INTRODUCTION}

Driving in adverse conditions (e.g. high-speed racing or icy roads with low traction) is challenging for both human drivers and autonomous vehicles.
The vehicle operates near its handling limits in a highly nonlinear regime and has to cope with uncertainties due to unmodeled dynamics and noises in sensing and localization~\cite{Liniger2015}. 
Furthermore, it needs to have a very low response time to adapt to the rapidly changing environment~\cite{Kabzan2019amz}. 
Deterministic worst-case frameworks including robust sliding-mode control~\cite{Zhang2020} 
can often be efficiently computed but require full system models and small bounded uncertainties (errors). Techniques based on set invariance, such as control barrier functions and barrier certificates~\cite{ames19,Xiao2021, Huang2021, Black2023}, are applied to vehicle systems with analytical models when these functions can be designed. Techniques based on probabilistic invariance~\cite{Gangadhar2022,gangadhar2023occlusion} can be used to generate safety certificates using samples of complex (black-box) systems in extreme environments~\cite{Gangadhar2022} and occluded environments~\cite{gangadhar2023occlusion}.  
Model Predictive Control (MPC) techniques, including stochastic MPC~\cite{Brudigam2023} and chance-constrained MPC~\cite{Carvalho2014}, exploit future predictions to account for uncertainties. 
However, as the number of possible trajectories grows exponentially to the outlook time horizon, there are often stringent tradeoffs between outlook time horizon and computation burdens. 
Thus, it is still challenging to ensure safety in adverse and uncertain conditions with lightweight algorithms suitable for onboard computation.


Meanwhile, within drifting control, various model-based/model-free techniques are considered. 
In \cite{Hindiyeh2014}, a bicycle model is used to describe the phenomenon of vehicle drifting, and an unstable “drift equilibrium” has been found. 
Sustained drifting has been achieved by stabilizing a drift equilibrium using various control methods, such as LQR~\cite{Bardos2020}, robust control~\cite{Xu2021}, and MPC~\cite{Bellegarda2022}. 
Hierarchical control architectures are proposed in~\cite{Yang2022,Chen2023} 
for general path tracking. 
A drawback of these previous works is that they aim to stabilize a specific equilibrium or reference trajectories, and pre-computation of such a reference is required.  
As a data-driven drifting control, Probability Inference for Learning Control (PILCO) has been applied~\cite{cutler16}, but the method is only considered in a single-task setting of minimizing tracking error of a particular drift equilibrium. 
Soft actor-critic algorithm was designed to go through sharp corners in a racing circuit in simulations~\cite{Cai2020:drift_drl}. 
Twin-Delayed Deep Deterministic (TD3) Policy Gradient  is developed in \cite{Orgovan2021}, and tabular Q-learning is used in \cite{Toho2023}. 
The transference of RL agents to the real world was studied by ~\cite{Schnieders2018,Domberg2022}, using experiments of radio-controlled (RC) model cars. 
Drift parking task is considered in \cite{Leng2023}. 
However, like other RL problems~\cite{Dulac-Arnold2021}, an appropriate design of reward function is required to generalize across different drift maneuvering tasks.
The reward functions in these works tend to be complex, and the distance from a target drift equilibrium or a reference trajectory is typically used for reward shaping. 

In this paper, we propose a novel approach to control vehicle drifting based on Physics-Informed Reinforcement Learning (PIRL). Specifically, PIRL is used to learn a control policy that maximizes the safety probability such that the vehicle can safely drift or slip in adverse conditions. Major technical challenges stem from the fact that the objective function associated with this problem takes multiplicative or maximum costs over time and that the reward is binary and sparse in time. To overcome the difficulty of learning multiplicative or maximum costs, we first present a transformation that converts this problem into RL with additive costs. To efficiently learn from sparse reward, we built upon our previous work ~\cite{Hoshino2024:ACC}, which imposes physics constraints on the loss function. This term in the loss function plays an analogous role in rewarding shaping and allows safe policies to be learned only from binary rewards that are sparse in time. The proposed framework only requires safe events (safe regions) to be specified without the need for reference trajectories or laborious reward shaping.  
%
%
%
The effectiveness of the proposed approach is demonstrated through lane keeping in a normal cornering scenario and safe drifting in a high-speed racing scenario.  

\begin{figure*}[t!]
    \centering
    \includegraphics[width=0.9\linewidth]{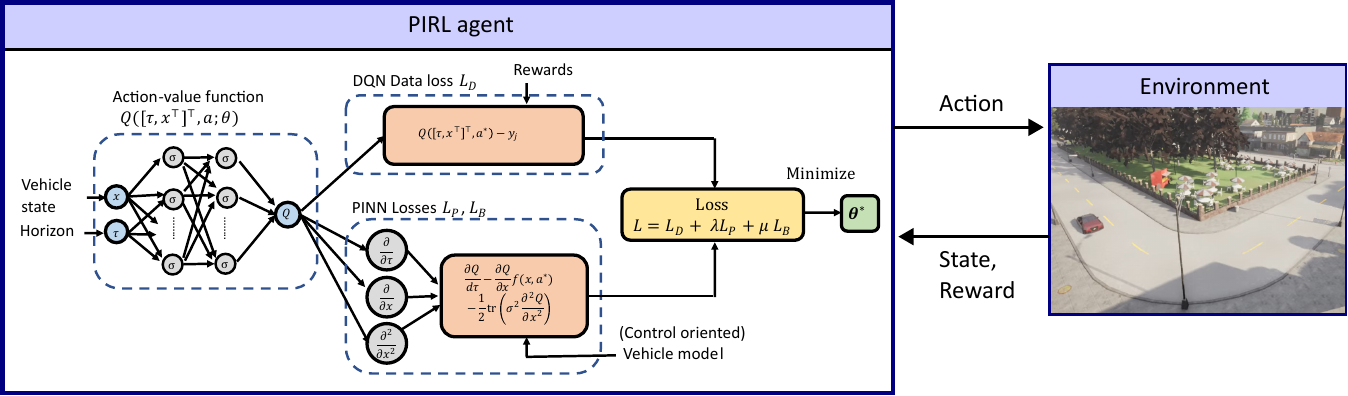}
    \caption{Framework of training by Physics-informed reinforcement learning (PIRL)}
    \label{fig:PIRL_framework}
\end{figure*}

\subsection{Notation}

Let $\mathbb{R}$ and $\mathbb{R}_+$be the set of real numbers and the set of nonnegative real numbers, respectively. 
Let $\mZ$ and $\mZ_+$ be the set of integers and the set of non-negative integers. 
For a set $A$, $A^\mathrm{c}$ stands for the complement of $A$, and $\partial A$ for the boundary of $A$. 
Let $\lfloor x \rfloor \in \mZ$ be the greatest integer less than or equal to $x\in\R$.  
Let $\mathds{1}[\mathcal{E}]$ be an indicator function, which takes 1 when the condition $\mathcal{E}$ holds and otherwise 0.
Let $\mP[ \mathcal{E} | X_0 = x ]$ represent the probability that the condition $\mathcal{E}$ holds involving a stochastic process $X = \{ X_t \}_{t\in\R_+}$ conditioned on $X_0 = x$.  
Given random variables $X$ and $Y$, let $\mathbb{E}[X]$ be the expectation of $X$ and $\mathbb{E}[X|Y=y]$ be the conditional expectation of $X$ given $Y=y$. We use upper-case letters (\emph{e.g.}, $Y$) to denote random variables and lower-case letters (\emph{e.g.}, $y$) to denote their specific realizations. 
For a scalar function $\phi$, 
$\partial_x \phi$ stands for the gradient of $\phi$ with respect to $x$, and $\partial_x^2 \phi$ for the Hessian matrix of $\phi$. 
Let $\mathrm{tr}(M)$ be the trace of the matrix $M$. 
For $x \in \R^n$ and $A \subset \R^n$, $\text{dist}(x, A) := \inf_{y \in A} \| x - y \|$.

\section{Maximal Safety Probability Learning}

In this section, the problem formulation of estimating maximal safety probability is given in \secref{sec:problem_formulation}, and the PIRL algorithm \cite{Hoshino2024:ACC} is briefly reviewed in  \secref{sec:PIRL}.

\subsection{Problem Formulation} \label{sec:problem_formulation}

We assume that the vehicle dynamics can be represented as a control system with stochastic noise of $w$-dimensional Brownian motion $\{ W_t \}_{t \in \R_+}$ starting from $W_0 = 0$. 
The system state $X_t \in \mathbb{X} \subset \mathbb{R}^n$ is assumed to be observable and evolves according to the following stochastic differential equation (SDE): 
\begin{align}
 \mathrm{d}X_t =  f(X_t, U_t) \mathrm{d}t + \sigma(X_t, U_t) \mathrm{d}W_t,  \label{eq:sde}   
\end{align}
where $U_t \in \mathbb{U} \subset \mathbb{R}^m$ is the control input. 
Throughout this paper, we assume sufficient regularity in the coefficients of the system \eqref{eq:sde}.
That is, the functions $f$ and $\sigma$ are chosen in a way such that the SDE \eqref{eq:sde} admits a unique strong solution (see, e.g., Section IV.2 of \cite{Fleming06}). 
The size of $\sigma(X_t, U_t)$ is determined from the uncertainties in the disturbance, unmodeled dynamics, and prediction errors of the environmental variables. 
For numerical approximations of the solutions of the SDE and optimal control problems, we consider a discretization with respect to time with a constant step size $\Delta t$ under piecewise constant control processes. 
For $0=t_0 < t_1 < \dots < t_k < \dots$, where $t_k := k\Delta t$, $k \in \mZ_+$, by defining the discrete-time state $X_k := X_{t_k}$ with an abuse of notation, the discretized system can be given as
\begin{align}
  X_{k+1} = F^\pi(X_k, \Delta W_k), \label{eq:descrite_system}
\end{align}
where $\Delta W_k := \{ W_t \}_{t \in [t_k, t_{k+1})}$, and $F^{\pi}$ stands for the state transition map derived from \eqref{eq:sde} under a Markov control policy $\pi : [0, \infty) \times \mathbb{X} \to \mathbb{U}$. 

%
Safety of the system can be defined by using a safe set $\SSet \subset \mathbb{X}$.  
For the discretized system \eqref{eq:descrite_system} and for a given control policy $\pi$, 
the safety probability $\SProb^{\pi}$ of 
the initial state $X_0 = x$ for the outlook horizon $\tau \in \R$ 
can be characterized as 
the probability that the state $X_k$ stays within the safe set $\SSet$ for 
$k \in \mathcal{N}_\tau := \{0, \dots, N(\tau)\}$, where $N(\tau) := \lfloor \tau / \Delta t \rfloor$, \emph{i.e.}, 
\begin{align} \label{eq:safety_probability}
\SProb^\pi(\tau, x)
:= \mP[ X_k \in \SSet,\, \forall k \in \mathcal{N}_\tau ~|~ X_0 = x, \pi ].  
\end{align}
Then, the objective of the learning is to estimate the maximal safety probability defined as 
   \begin{align}
     \Psi^\ast(\tau,x) := \sup_{ \pi \in \mathcal{P} } \Psi^\pi(\tau,x),
   \end{align}
where $\mathcal{P}$ is the class of bounded and Borel measurable Markov control policies, and to learn the corresponding safe control policy $\pi^\ast := \argsup_{\pi \in \mathcal{P}} \Psi^\pi$.    

\subsection{Physics-informed RL (PIRL)} \label{sec:PIRL}



The training framework is illustrated in \figref{fig:PIRL_framework}. 
PIRL integrates RL and Physics-Informed Neural Networks (PINNs)~\cite{Raissi2019}
for efficient estimation of maximal safety probability. 
This paper uses a PIRL algorithm based on Deep Q-Network (DQN) \cite{mnih15}.
The overall structure follows from the standard DQN algorithm, and the  optimal action-value function will be estimated by using a function approximator. 
For this function approximator,  we use a PINN, which is a neural network trained by penalizing the discrepancy from a partial differential equation (PDE) condition that the safety probability should satisfy. 
To define an appropriate RL problem, we consider the augmented state space $\mathcal{S} := \R \times \mathbb{X} \subset \R^{n+1}$ and the augmented state $S_k \in \mathcal{S}$, where we denote the first element of $S_k$ by $H_k$ and the other elements by $X_k$, \emph{i.e.}, 
\begin{align}
 S_k =[ H_k, X_k^\top]^\top,  \label{eq:augmented_state}
\end{align}
where $H_k$ represents the remaining time before the outlook horizon $\tau$ is reached. 
Then, consider the stochastic dynamics starting from the initial state $ s := [\tau, x^\top]^\top \in \mathcal{S}$ given as follows%
\footnote{With this augmentation, the variable $X_k$ in $S_k$ is no longer equivalent to that in \eqref{eq:descrite_system}, because $X_k$ transitions to itself when $S_k \in \mathcal{S}_\mathrm{abs}$. However, we use the same notation for simplicity. See \cite{Hoshino2024:ACC} for precise derivation.}: 
for $\forall k \in \mZ_+$, 
\begin{align}
  S_{k+1} = 
  \begin{cases}
      \tilde{F}^\pi(S_k, \Delta W_k),  & S_k \notin \mathcal{S}_\mathrm{abs},   \\
      S_k,  & S_k \in \mathcal{S}_\mathrm{abs}, 
  \end{cases}
 \label{eq:augmented_dynamics}
 \end{align}
with the function $\tilde{F}^\pi$ given by 
\begin{align}
  \tilde{F}^\pi(S_k, \Delta W_k) := 
 \begin{bmatrix} 
    H_k - \Delta t \\
    F^\pi( X_k, \Delta W_k) 
 \end{bmatrix},   
\end{align}
and the set of absorbing states $\mathcal{S}_\mathrm{abs}$ given by 
\begin{align}
 \mathcal{S}_\mathrm{abs} := \{ [ \tau, x^\top]^\top \in \mathcal{S} ~|~  \tau < 0 \, \lor \, x \notin \SSet   \}.
 \label{eq:terminal_states}
\end{align}
Then, the following proposition states that the value function of our RL problem is equivalent to the safety probability. 
%
%
\begin{proposition} \label{prop:RL}
Consider the system \eqref{eq:augmented_dynamics} starting from an initial state $s = [\tau, x^\top]^\top \in \mathcal{S}$ 
and the reward function $r: \mathcal{S} \to \R$ given by
\begin{align} \label{eq:reward}
  r(S_k) := 
      \mI[ H_k \in \mathcal{G} ]
\end{align}
with $\mathcal{G} := [0, \Delta t)$. 
Then, for a given control policy $\pi$, the value function $v^{\pi}$ defined by  
\begin{align}
  &v^{\pi}(s) := \mathbb{E} \left[ \sum_{k=0}^{N_\mathrm{f}} r( S_k ) ~\middle | ~ S_0= s, \pi  \right],  \label{eq:safe_ocp_cost}
\end{align}
where $N_\mathrm{f} := \inf \{ j \in \mZ_+ \,|\, S_j \in \mathcal{S}_\mathrm{abs} \}$,
takes a value in $[0,1]$ and is equivalent to the safety probability $\SProb^\pi(\tau, x)$, i.e., 
\begin{align}
 v^{\pi}(s) = \SProb^{\pi}(\tau, x).     
\end{align}
\end{proposition}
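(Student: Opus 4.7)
The plan is to establish the pathwise identity $\sum_{k=0}^{N_\mathrm{f}} r(S_k) = \mI_A$ almost surely, where $A := \{X_k \in \SSet,\ \forall k \in \mathcal{N}_\tau\}$ is the safety event, and then take conditional expectation to obtain $v^\pi(s) = \mP[A \mid X_0 = x,\pi] = \SProb^\pi(\tau,x)$, which automatically lies in $[0,1]$ as a probability.

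The central observation is that, while $S_k \notin \mathcal{S}_\mathrm{abs}$, the augmented dynamics \eqref{eq:augmented_dynamics} decrement the first coordinate deterministically, giving $H_k = \tau - k\Delta t$. Hence $r(S_k) = \mI[H_k \in [0,\Delta t)]$ can be nonzero only at the single index $k = N(\tau) = \lfloor \tau/\Delta t\rfloor$. From here I would verify the pathwise identity by splitting on $A$ versus $A^\mathrm{c}$. On $A$, the state avoids $\mathcal{S}_\mathrm{abs}$ for every $k \leq N(\tau)$, so exactly one unit of reward is collected at $k = N(\tau)$; the next transition produces $H_{N(\tau)+1} < 0$, forcing $S_{N(\tau)+1} \in \mathcal{S}_\mathrm{abs}$, so $N_\mathrm{f} = N(\tau)+1$ and no further contribution arises. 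On $A^\mathrm{c}$, let $k^\ast := \inf\{k : X_k \notin \SSet\} \leq N(\tau)$; for $k < k^\ast$ we have $H_k \geq \Delta t$ so the reward is zero, and for $k \geq k^\ast$ the state is frozen at $S_{k^\ast}$, so no further nonzero reward can be collected either.

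The main obstacle I anticipate is the boundary case $k^\ast = N(\tau)$, where the frozen absorbing state nonetheless satisfies $H_{k^\ast} \in [0,\Delta t)$ and would naively contribute one unit to the sum. Resolving this requires a careful reading of the reward/absorption convention---either interpreting the summation as terminating one step before entry into $\mathcal{S}_\mathrm{abs}$ caused by exit from $\SSet$, or equivalently restricting $r$ to unabsorbed states---so that this edge case contributes $0$, matching the definition of $\SProb^\pi$. With that convention in place, the remainder is routine bookkeeping driven by the deterministic evolution of $H_k$ and linearity of expectation; in particular, the fact that the $X$-component of \eqref{eq:augmented_dynamics} coincides with the unabsorbed evolution in \eqref{eq:descrite_system} up to step $N_\mathrm{f}$ makes the reduction $\mP[A\mid X_0=x,\pi] = \SProb^\pi(\tau,x)$ immediate.
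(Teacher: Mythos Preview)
Your direct pathwise argument is correct in approach and in fact more informative than the paper's own proof, which simply cites Proposition~1 of \cite{Hoshino2024:ACC} and performs no computation. Establishing $\sum_{k=0}^{N_\mathrm{f}} r(S_k) = \mI_A$ sample-path by sample-path, then taking expectation, is the natural route, and your use of the deterministic decrement $H_k = \tau - k\Delta t$ to isolate the single index $k=N(\tau)$ is exactly right.

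The edge case you flag at $k^\ast = N(\tau)$ is a genuine issue with the statement as written: if $X_k \in \SSet$ for $k < N(\tau)$ but $X_{N(\tau)} \notin \SSet$, then $N_\mathrm{f} = N(\tau)$, the sum $\sum_{k=0}^{N_\mathrm{f}} r(S_k)$ picks up $r(S_{N(\tau)}) = \mI[H_{N(\tau)} \in [0,\Delta t)] = 1$, yet the trajectory is unsafe. Your proposed resolution---either summing only over unabsorbed steps, or equivalently multiplying the reward by $\mI[X_k \in \SSet]$---is precisely what is needed. It is worth noting that Theorem~\ref{thm:hjb} in the paper uses the modified reward $r_\epsilon(S_k) = \mI[H_k \in \mathcal{G}]\, l_\epsilon(X_k)$ with $l_\epsilon(x) = 0$ for $x \notin \SSet$, which silently fixes this boundary case; the unregularized version in Proposition~\ref{prop:RL} appears to rely on the convention from \cite{Hoshino2024:ACC} that you correctly anticipate.
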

\begin{proof}
  The proof follows from Proposition\,1 of \cite{Hoshino2024:ACC} with a minor rewrite of the reward and value functions.   
\end{proof}

Thus, the problem of safety probability estimation can be solved by an episodic RL problem with the action-value function $q^\pi(s, u)$, defined as the value of taking an action $u \in \mathbb{U}$ in state $s$ and thereafter following the policy $\pi$:
\begin{align}
 q^\pi(s, u) := \mathbb{E}\left[ \sum_{k=0}^{N_\mathrm{f}} r(S_k) ~\middle|~ S_0 = s,\, U_0 = u,\, \pi  \right].
\end{align}
%
%
Furthermore, it is shown in \cite{Hoshino2024:ACC} that the maximal safety probability can be characterized by the Hamilton-Jacobi-Bellman (HJB) equation of a stochastic optimal control problem.
While the HJB equation can have discontinuous viscosity solutions, the following theorem shows that a slightly conservative but arbitrarily precise approximation can be constructed by considering a set $\SSet_\epsilon$ smaller than $\SSet$: 
%
%
%
%
\begin{theorem}[\cite{Hoshino2024:ACC}] \label{thm:hjb}
Consider the system \eqref{eq:augmented_dynamics} derived from the SDE \eqref{eq:sde} and the action-value function $q^\pi_\epsilon(s,u)$ given by 
\begin{align}
 q^\pi_\epsilon(s, u) := \mathbb{E}\left[ \sum_{k=0}^{N_\mathrm{f}} r_\epsilon(S_k) ~\middle|~ S_0 = s,\, U_0 = u,\, \pi  \right], 
\end{align}
where  the function $r_\epsilon$ is given by 
\begin{align}
  r_\epsilon(S_k) := 
      \mI[ H_k \in \mathcal{G} ]\, l_\epsilon( X_k ),
\end{align}
with $\SSet_\epsilon := \{ x \in \SSet \,|\, \mathrm{dist}(x, \SSet^\mathrm{c}) \ge \epsilon \}$ and 
\begin{align}
 l_\epsilon (x) :=  \max \left\{  1- \dfrac{\mathrm{dist}(x, \SSet_\epsilon)}{\epsilon}, \, 0\right\}.  
\end{align}
Then, if the Assumption\,1 in \cite{Hoshino2024:ACC} holds, the optimal action-value function $q^\epsilon(s,u): = \sup_{ u \in \mathbb{U} } q^\pi_\epsilon(s,u)$ becomes a continuous viscosity solution of the following PDE in the limit of $\Delta t \to 0$: 
\begin{align} 
      & 
      \partial_s  q^\ast_\epsilon(s, u^\ast) 
      \tilde{f}(s, u^\ast) 
      \notag \\ & \hspace{5mm} 
      +\dfrac{1}{2} \mathrm{tr} \left[ \tilde{\sigma}(s,u^\ast)\tilde{\sigma}(s,u^\ast)^\top  \partial_s^2 q^\ast_\epsilon(s, u^\ast)  
      \right] = 0,  \label{eq:pde}
\end{align} 
where the function $\tilde{f}$ and $\tilde{\sigma}$ are given by  
\begin{align}
    \tilde{f}(s, u) := 
    \begin{bmatrix}
       -1 \\   f(x, u)
    \end{bmatrix},
    \quad
    \tilde{\sigma}(s, u) := 
    \begin{bmatrix}
        0 \\ \sigma(x, u)        
    \end{bmatrix}, 
\end{align}
and $u^\ast := \argsup_{u \in \mathbb{U}} q^\ast(s, u)$. 
The boundary conditions are given by 
\begin{align}
  & q^\ast_\epsilon([0, x^\top]^\top, u^\ast) = l_\epsilon( x ), \quad  \forall x \in \mathbb{X}, \label{eq:boundary_tau_zero} \\
    & q^\ast_\epsilon([\tau, x^\top]^\top, u^\ast) = 0, \quad   \forall \tau \in \R, ~ \forall x \in \partial\SSet.    \label{eq:boundary_unsafe}
\end{align}
\end{theorem}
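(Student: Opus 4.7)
The plan is to leverage the $\epsilon$-smoothing to reduce the problem to a classical stochastic optimal control setting whose value function is known to be a continuous viscosity solution of an HJB equation on the augmented state.

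First, I would simplify the structure of $q^\pi_\epsilon$. Since $\mI[H_k \in \mathcal{G}]$ is nonzero only at the single step at which $H_k$ enters $[0,\Delta t)$, and since the dynamics \eqref{eq:augmented_dynamics} freeze as soon as $H_k<0$ or $X_k \notin \SSet$, the sum defining $q^\pi_\epsilon$ collapses to a single terminal payoff: the trajectory contributes $l_\epsilon(X_{N(\tau)})$ if it remains in $\SSet$ throughout $\mathcal{N}_\tau$, and zero otherwise. Taking the supremum over $\pi$ then casts the problem as a standard exit-time stochastic optimal control problem on $\mathcal{S}$, in which the discontinuous indicator terminal cost $\mI[x\in\SSet]$ has been replaced by the Lipschitz surrogate $l_\epsilon$.

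Second, I would derive the PDE formally from dynamic programming. For $s \notin \mathcal{S}_\mathrm{abs}$, the discrete Bellman recursion reads
\begin{align*}
  q^\ast_\epsilon(s, u) = \mathbb{E}\bigl[\,\sup_{u'\in\mathbb{U}} q^\ast_\epsilon(\tilde F^\pi(s, \Delta W_0), u')\,\bigr].
\end{align*}
A Taylor expansion in $\Delta t$, combined with the Brownian moment identities $\mathbb{E}[\Delta W_0]=0$ and $\mathbb{E}[\Delta W_0 \Delta W_0^\top]=\Delta t\, I$, produces at leading order exactly the first-order transport term $\partial_s q^\ast_\epsilon \cdot \tilde f(s,u^\ast)$ and the second-order diffusion term $\tfrac{1}{2}\mathrm{tr}(\tilde\sigma\tilde\sigma^\top \partial_s^2 q^\ast_\epsilon)$ of \eqref{eq:pde}. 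The boundary condition \eqref{eq:boundary_tau_zero} follows from the definition of $r_\epsilon$ at $H_k \in [0,\Delta t)$, and \eqref{eq:boundary_unsafe} from the absorbing structure of $\mathcal{S}_\mathrm{abs}$ on $\partial\SSet$.

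Third, to make the limit $\Delta t \to 0$ rigorous and to conclude \emph{continuity} of the viscosity solution, I would invoke the Barles--Souganidis framework for monotone, stable, consistent schemes, together with the viscosity theory for controlled diffusions (Fleming--Soner). The crucial role of $\epsilon>0$ enters here: $l_\epsilon$ is bounded in $[0,1]$, is $1/\epsilon$-Lipschitz, and vanishes on $\partial\SSet$, which matches the Dirichlet data \eqref{eq:boundary_unsafe}. This Lipschitz regularity propagates through the control formulation to equicontinuity of the approximating value functions, allowing a uniform limit.

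The main obstacle is precisely this last continuity step. Without the $\epsilon$-smoothing the natural candidate would be only a discontinuous viscosity solution, because the raw indicator $\mI[x\in\SSet]$ is incompatible with the zero boundary condition on $\partial\SSet$. Securing comparison on the augmented domain with mixed boundary data at $\tau=0$ and on $\partial\SSet$, and verifying monotonicity, stability, and consistency of the discrete scheme under the regularity of $f$ and $\sigma$, is the technical heart of the argument—this is what Assumption 1 in \cite{Hoshino2024:ACC} is tailored to provide.
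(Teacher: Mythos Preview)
The paper does not contain its own proof of this theorem: the statement is attributed to \cite{Hoshino2024:ACC} and is quoted without argument, followed only by a remark on regularity. There is therefore no in-paper proof against which your proposal can be compared.

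That said, your sketch is a sensible reconstruction of how such a result is typically obtained. Collapsing the sum to a single terminal payoff $l_\epsilon(X_{N(\tau)})\mI[\text{stay in }\SSet]$ is correct given the absorbing structure of $\mathcal{S}_\mathrm{abs}$ and the fact that $H_k$ enters $\mathcal{G}$ at most once; this indeed recasts the problem as an exit-time optimal control problem with Lipschitz terminal/boundary data. The formal derivation of \eqref{eq:pde} via It\^o/Taylor expansion and the dynamic programming principle is standard, and your invocation of Barles--Souganidis for the $\Delta t\to 0$ limit together with Fleming--Soner-type comparison is the natural machinery. Your identification of the role of $\epsilon$---turning an incompatible indicator terminal condition into Lipschitz data that vanishes on $\partial\SSet$, thereby enabling continuity of the viscosity solution---matches the motivation the paper gives for introducing $l_\epsilon$. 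Whether this goes through without additional hypotheses depends on exactly what ``Assumption~1 in \cite{Hoshino2024:ACC}'' contains (e.g., compactness of $\mathbb{U}$, nondegeneracy or structural conditions on $\sigma$, and a comparison principle on the augmented domain with the mixed Cauchy--Dirichlet data \eqref{eq:boundary_tau_zero}--\eqref{eq:boundary_unsafe}); you correctly flag this as the technical crux, but you would need to consult that reference to verify that your Barles--Souganidis route aligns with the hypotheses actually assumed there.
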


\begin{remark}
  When we assume further regularity conditions on the function $l_\epsilon(x)$ (i.e., differentiability of $l_\epsilon(x)$), the PDE \eqref{eq:pde} can be understood in the classical sense (see e.g., \cite[Theorem IV.4.1]{Fleming06}). This means that the PDE condition can be imposed by the technique of PINN using automatic differentiation of neural networks. 
\end{remark}

Based on the above, a standard DQN algorithm is integrated with a PINN by using a  modified loss function given by 
\begin{align}
    L = & L_\mathrm{D} + \lambda L_\mathrm{P}
    + \mu L_\mathrm{B}, \label{eq:loss_DQN+PINN} 
\end{align}
where $L_\mathrm{D}$ is the data loss of the original DQN, $L_\mathrm{P}$ the loss term imposing the PDE \eqref{eq:pde}, and $L_\mathrm{B}$ the loss term for the boundary conditions \eqref{eq:boundary_tau_zero} and \eqref{eq:boundary_unsafe}. 
The parameters $\lambda$ and $\mu$ are the weighting coefficients for the regularization loss terms $L_\mathrm{P}$ and $L_\mathrm{B}$, respectively. 
Detailed descriptions of these loss terms are given in \cite{Hoshino2024:ACC}.

\section{Numerical Experiments}

This section discusses numerical results of autonomous driving based on maximal safety probability learning. 
After presenting training methodology in \secref{sec:setup}, we provide results of normal cornering in \secref{sec:lane_keeping} and high-speed drifting in \secref{sec:drifting}. 
Our implementation is available at \url{https://github.com/hoshino06/pirl_itsc2024}.

\subsection{Training Methodology} \label{sec:setup}

The PIRL agent learns maximal safety probabilities by interacting with CARLA~\cite{Dosovitskiy2017}, an open-source simulator providing high-fidelity vehicle physics simulations that are not explicitly described by control-oriented models such as a bicycle model. 
The input to the neural network as a function approximator $Q$ for our action-value function is the state $s \in \mathcal{S}$ defined in \eqref{eq:augmented_state}, and the output is a vector of q values for all discrete actions as explained below. 
The state $s$ consists of the vehicle state $x$ and the outlook horizon $\tau$:
\begin{align}
 s = [\tau, x^\top]^\top,
\end{align}
and $x$ can be further decomposed into $x_\mathrm{vehicle}$ for the vehicle dynamics and $x_\mathrm{road}$ for the vehicle position relative to the road:
\begin{align}
 x = 
 [ \,\underbrace{v_x, \, \beta, \, r}_{x_\mathrm{vehicle}}, \,\underbrace{ e, \,\psi,  \mathcal{T} }_{x_\mathrm{road}} \, ]^\top \in \mathbb{R}^{15},  
\end{align}
where $x_\mathrm{vehicle}$ consists of $v_x$ standing for the vehicle longitudinal velocity, $\beta$ for the sideslip angle, defined as $\beta := \arctan(v_y/v_x)$ with the lateral velocity $v_y$, and $r$ for the yaw rate. 
The variable $x_\mathrm{road}$ consists of $e$ standing for the lateral error from the center line of the road, and $\psi$ for the heading error, and $\mathcal{T}$ that contains five $(x, y)$ positions of reference points ahead of the vehicle placed on the center line of the lane.
%
%
The control action $a$ is given by 
\begin{align}
    a = [ \delta, \,  d  ]^\top, 
\end{align}
where $\delta$ is the steering angle, and $d$ is the throttle.
They are normalized to $[-1, 1]$ and $[0, 1]$, respectively. 
To implement our DQN-based algorithm, which admits a discrete action space, the control inputs are restricted to $d \in \{ 0.6,\, 0.7,\, 0.8,\, 0.9,\, 1.0  \}$ and $\delta \in \{ -0.8, \, -0.4, \, 0, \, 0.4, \, 0.8\}$. 
The reason for limiting $d \ge 0.6$ is to prevent slow driving with trivially safe behaviors (the vehicle is safe if it never moves). 
The steering is limited to $|\delta| \le 0.8$, since high-speed vehicles are prone to rollover if large steering angles are applied \cite{Cai2020:drift_drl}.

For better sample efficiency and generalization to unseen regions, PIRL can exploit the structure of a control-oriented model encoded in the form of a PDE in \eqref{eq:pde} and the regularization loss term $L_\mathrm{P}$ in \eqref{eq:loss_DQN+PINN}. 
In this paper, we used a dynamic bicycle model with a 
Pacejka tire model. 
The parameters of the model that correspond to the vehicle dynamics in CARLA are assumed to be known in this paper, but PINN can also be used for the discovery of coefficients of PDEs from data~\cite{Raissi2019}, or the entire tire model can be obtained from online learning~\cite{Kalaria2023:APAC}.   
Also, the term $\sigma(X_t, U_t)$ and the convection term for the reference points $\mathcal{T}$ were neglected in the simulations, which need to be constructed or learned from data for more precise estimation. 


\subsection{Normal Cornering} \label{sec:lane_keeping}

We first show preliminary results with a normal cornering task. 
The objective is to keep the vehicle within the lane while driving on a road. 
The safe set $\SSet \subset \mathbb{X}$ is given by 
\begin{align}
 \SSet = \{ x \in \mathbb{X} ~|~ |e| \le E_\mathrm{max} \}, 
 \label{eq:safe_set}
\end{align}
where $E_\mathrm{max}$ is the maximum distance from the center line of the lane, and it is assumed to be constant. 
We used a corner in a built-in map of the CARLA simulator (southwest part of Town2 and shown in \figref{fig:town2_south_west}), and the bound of the lateral error is set to $E_\mathrm{max} = \SI{1}{m}$. 
During the training, the initial vehicle speed was randomly selected within $[\SI{5}{m/s}, \SI{15}{m/s}]$, and the vehicle spawn point was randomized for each episode. 
Also, the outlook horizon $\tau$ was uniformly randomized between $0$ and $\SI{5.0}{s}$. 
For the function approximator $Q$, we used a neural network with 3 hidden layers with 32 units per layer and the hyperbolic tangent (\texttt{tanh}) as the activation function. 
At the output layer, the sigmoid activation function was used.

\begin{figure}[!t]
    \centering
    \includegraphics[width=0.7\linewidth]{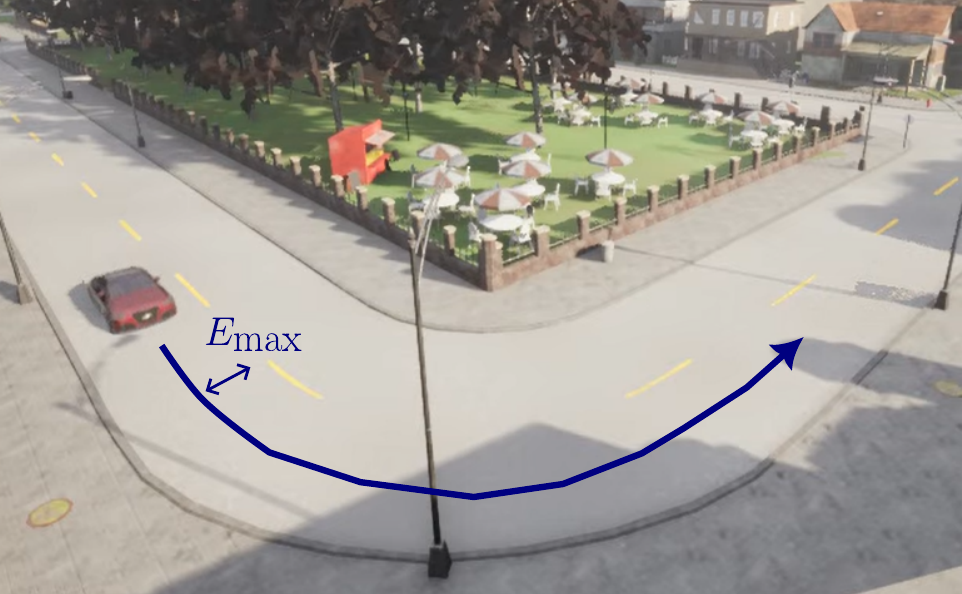}
    \caption{Lane keeping with normal cornering}
    \label{fig:town2_south_west}
\end{figure}

\Figref{fig:town2_training_progress} shows the training progress with the learning rate of $5\times10^{-4}$ and the regularization weight of $1 \times 10^{-4}$. 
The \emph{orange} and \emph{blue} lines show the episode reward and the q-value averaged over a moving window of 500 episodes.   
The solid curves represent the mean of 10 repeated experiments, and the shaded region shows their standard deviation. 
While there is a large variance during the transient phase of learning, it converges to similar values after about 20,000 episodes.
%
%
\begin{figure}[!t]
    \centering
    \includegraphics[width=0.8\linewidth]{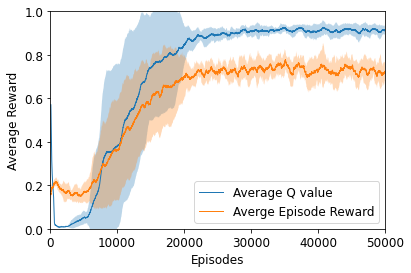}
    \caption{Training progress for normal cornering task}
    \label{fig:town2_training_progress}
\end{figure}
%
%
%
As defined in \eqref{eq:safety_probability}, the safety probability depends on the initial state and the horizon $\tau$. 
\Figref{fig:SafeProb_e_psi} shows the dependency of the safety probability on the lateral error $e$ and the relative heading angle $\psi$, when the vehicle is placed on the straight part of the road. 
The vehicle state and the horizon are fixed to $x_\mathrm{vehicle} = [\SI{10}{m/s}, \,0,\, 0]^\top$ and $\tau = \SI{5.0}{s}$. 
The learned safety probability is as high as approximately 1 near $(e, \psi)=(0,0)$, and decreases near the boundaries of the road ($|e|=\SI{1}{m}$). 
Also, it decreases when the heading angle is large.  
Similarly, \figref{fig:town2_safety_probability}b shows that the safety probability decreases as the velocity $v_x$ increases and is higher in the inner side of the curve ($e < 0$). 
Overall, the above result appropriately describes how the safety probability depends on the vehicle state and position in the lane. 
However, the values of the safety probability differed across agents, and further investigation is needed to ensure the accuracy. 
Nevertheless, it has been confirmed that the learned agents safely go through the corner when they are placed near the center of the road.

\begin{figure}[t]
  \begin{minipage}[b]{0.49\linewidth}
    \centering
    \includegraphics[width=\linewidth]{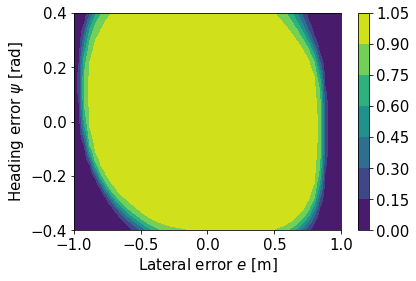}
    \subcaption{{Effect of $e$ and $\psi$ }}\label{fig:SafeProb_e_psi}
    \hspace*{5mm}(at straight road)
  \end{minipage}
  \begin{minipage}[b]{0.49\linewidth}
    \centering
    \includegraphics[width=\linewidth]{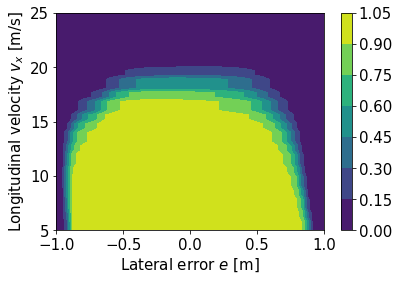}
    \subcaption{Effect of $e$ and $v_x$}\label{fig:SafeProb_e_psi}
    \hspace*{1mm}(at the corner)
  \end{minipage}
  \caption{Learned safety probability}\label{fig:town2_safety_probability}
\end{figure}


\subsection{Safe Drifting} \label{sec:drifting}

Here we present results for safe drifting based on maximal safety probability learning. 
A racing circuit for CARLA developed in \cite{Cai2020:drift_drl} is used, and the training is performed at a specific corner as shown in \figref{fig:mapC_task}. 
The road width is about \SI{20}{m} and $E_\mathrm{max}$ can be $\SI{10.0}{m}$, but the safe set was conservatively set as $E_\mathrm{max} = \SI{8.0}{m}$ during the training.
The initial vehicle speed was set to $\SI{30}{m/s}$, and the slip angle $\beta$ and the yaw rate $r$ were randomized in the range of 
\begin{align}
\color{blue}
    &\beta \in [ -\SI{25}{deg}, -\SI{20}{deg} ], 
    \\ & 
    r \in [ \SI{50}{deg/s}, \SI{70}{deg/s} ].  
\end{align}%
The neural network has the same shape with that used above. 
%
%
%
%
%
For this task, the training progress is strongly influenced by the learning rate as shown in \figref{fig:mapC_training_progress}. 
As before, the solid curves represent the mean of the averaged rewards of 8 repeated experiments, and the shaded regions show their standard deviations. 
At the initial phase of training, the averaged reward (empirical safety probability) gradually decreases, but after a while it starts to increase.
The timing of this increase is faster for larger learning rates, but the final episode reward increases as the learning rate is reduced from $\SI{2e-4}{}$ to $\SI{5e-5}{}$. 
However, at the learning rate of $\SI{2e-5}{}$, the agents failed to learn safe policy and the reward dropped to zero except for 2 cases out of 8 experiments.
Also, while the learning rate of $\SI{5e-5}{}$ performed best among the four settings above, there were few cases where the reward dropped to zero, which were excluded from the plot in \figref{fig:mapC_training_progress}. 
After these trials, we selected an agent trained with the learning rate $\SI{5e-5}{}$ at a check point of 90,000 episodes that behaved best when tested with closed-loop simulations. 

\begin{figure}[!t]
    \centering
    \includegraphics[width=0.8\linewidth]{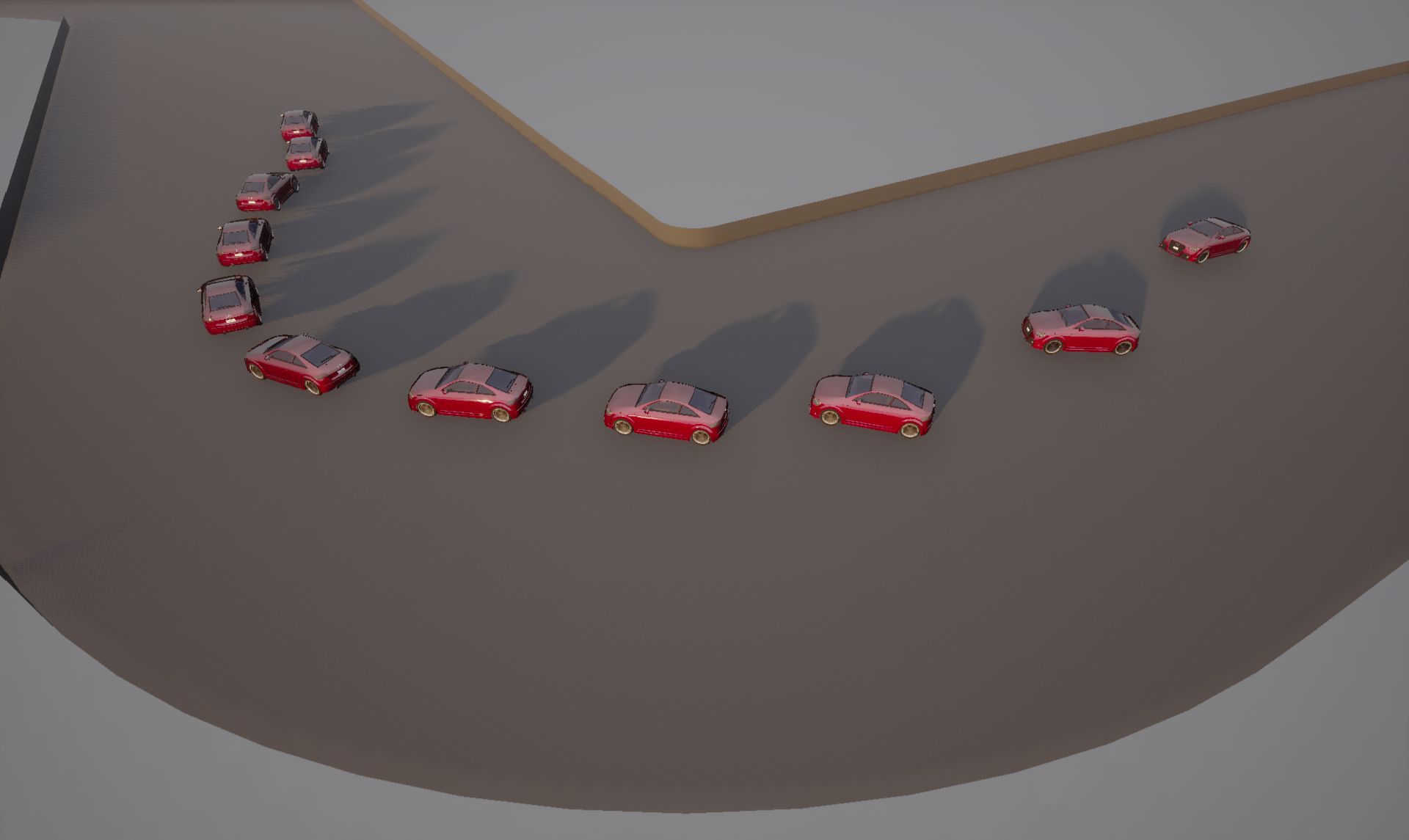}
    \caption{Drifting in racing circuit by a learned agent}
    \label{fig:mapC_task}
\end{figure}

\begin{figure}[!t]
    \centering
    \includegraphics[width=0.9\linewidth]{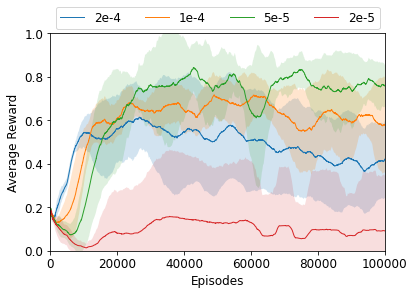}
    \caption{Training progress for safe drifting with different learning rate}
    \label{fig:mapC_training_progress}
\end{figure}

%
\Figref{fig:mapC_closed_loop}a shows the vehicle trajectories simulated using the learned agent. 
The cross mark ($\times$) in the figure shows the initial position of the vehicle, and 20 trajectories with different initial conditions of $\beta$ and $r$ are illustrated. 
It can be confirmed that the learned policy is navigating through the corner without hitting the boundary of the track for different initial conditions. As shown in \figref{fig:mapC_closed_loop}b,  the slip angle and yaw rate take large values, and the vehicle is safely drifting while cornering. 
These safe behaviors have been learned only from sparse zero and one rewards. 
It is interesting that such a drifting maneuver can be learned by maximizing the safety probability without providing a specific reference trajectories or laborious reward shaping. 



\begin{figure}[!t]
    \centering
     \subcaptionbox{Vehicle trajectories}{\includegraphics[width=0.9\linewidth]{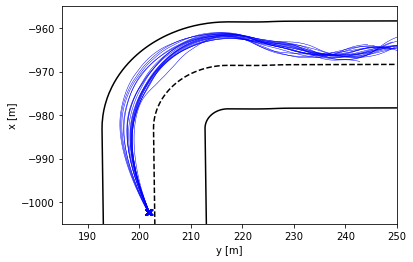}}\\[3mm]
    \subcaptionbox{Vehicle state}{ \includegraphics[width=0.9\linewidth]{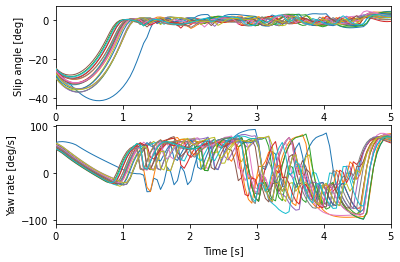}
    }
    \caption{Closed-loop simulation with learned policy}
    \label{fig:mapC_closed_loop}
\end{figure}

\section{Conclusions}

This paper explored autonomous driving based on the learning of maximal safety probability by using Physics-informed Reinforcement Learning (PIRL). 
Safe drifting was achieved only from sparse binary rewards without providing a specific reference trajectory or laborious reward shaping.
Since this concept is related to forward invariance in the state space, the learned safety probability may be used to safeguard a nominal controller which does not necessarily consider the safety specifications. 
Future work of this paper includes verifying the accuracy of the learned safety probability, and testing the proposed framework in more diverse scenarios.

\section*{ACKNOWLEDGMENT}

The authors would like to thank Dvij Kalaria for providing vehicle model parameters for CARLA simulation.

\bibliographystyle{IEEEtran}
\bibliography{reference}





\end{document}